\def\eqref#1{equation~\ref{#1}}
\def\1{\bm{1}}
\DeclareMathAlphabet{\mathsfit}{\encodingdefault}{\sfdefault}{m}{sl}
\SetMathAlphabet{\mathsfit}{bold}{\encodingdefault}{\sfdefault}{bx}{n}
\newcommand{\dpt}[1]{\left\langle#1\right\rangle} %
\newcommand{\sv}[1]{\ensuremath{\llbracket #1 \rrbracket}}
\let\emptyset\varnothing
\newtheorem{theorem}{Theorem}
\theoremstyle{definition}
\newtheorem{definition}{Definition}
\theoremstyle{definition}
\newtheorem{example}{Example}
\newcommand{\coloredbox}[2]{%
    \colorlet{currentcolor}{.}%
    {\color{#1}%
    \fbox{\color{currentcolor}#2}}%
}
\newcommand{\coloreddashedbox}[2]{%
    \colorlet{currentcolor}{.}%
    {\color{#1}%
    \adjustbox{precode=\dbox}{\color{currentcolor}#2}}%
}
\definecolor{boxbg}{rgb}{0.85,0.85,0.85}
\newcommand{\shadedbox}[1]{%
    {\fcolorbox{white}{boxbg}{\begin{varwidth}{\textwidth}\centering #1\end{varwidth}}}%
}
\newif\iftaclinstructions
\newcommand{\instr}
\newcommand{\dn}[1]{\sv{#1}}
\newcommand{\dnc}[2]{\dn{#1 | #2}}
\newcommand{\dnl}[1]{\sv{#1}_L}
\newcommand{\dnlc}[2]{\dnl{#1 | #2}}
\newcommand{\dntc}[2]{\sv{\text{#1} | #2}}
\newcommand{\assert}{\aleph}
\newcommand{\attnprobe}{\textsc{+Attn}\xspace}
\newcommand{\noattnprobe}{\textsc{--Attn}\xspace}
\definecolor{orange}{rgb}{1,0.5,0}
\definecolor{mdred}{rgb}{0.7,0,0}
\definecolor{mdgreen}{rgb}{0.05,0.6,0.05}
\definecolor{mdblue}{rgb}{0,0,0.7}
\definecolor{dkblue}{rgb}{0,0,0.5}
\definecolor{dkgray}{rgb}{0.3,0.3,0.3}
\definecolor{slate}{rgb}{0.25,0.25,0.4}
\definecolor{gray}{rgb}{0.5,0.5,0.5}
\definecolor{ltgray}{rgb}{0.7,0.7,0.7}
\definecolor{purple}{rgb}{0.7,0,1.0}
\definecolor{lavender}{rgb}{0.65,0.55,1.0}
\newcommand{\papercomment}[3]{\ensuretext{\textcolor{#3}{[#1 #2]}}}
\renewcommand{\papercomment}[3]{}  %
\definecolor{revcolor}{rgb}{0.04,0.6,0.04}
\newcommand{\rev}[1]{\textcolor{black}{#1}}
\newcommand{\revv}[1]{\textcolor{black}{#1}}
\title{Transparency Helps Reveal When Language Models Learn Meaning}
\author{Zhaofeng Wu$^\text{\Cancer}$ \quad
    William Merrill$^\text{\Scorpio}$ \quad
    Hao Peng$^\text{\Leo}$ \quad
    Iz Beltagy$^\text{\Leo}$ \quad
    Noah A. Smith$^{\text{\Leo\ } \rotatebox[y=0.1cm]{90}{\textsuperscript{\Cancer}}}$ \\
    $^\text{\Cancer}$MIT \quad $^\text{\Scorpio}$New York University \quad $^\text{\Leo}$Allen Institute for Artificial Intelligence \\
    $^{\rotatebox[y=0.1cm]{90}{\textsuperscript{\Cancer}}}$\hspace{-0.1cm}Paul G. Allen School of Computer Science \& Engineering, University of Washington \\
    \texttt{zfw@csail.mit.edu \quad willm@nyu.edu \quad \{haop,beltagy,noah\}@allenai.org}
}
\begin{document}
\maketitle
\begin{abstract}

Many current NLP systems are built from language models trained to optimize unsupervised objectives on large amounts of raw text. Under what conditions might such a procedure acquire meaning? Our systematic experiments with synthetic data reveal that, with languages where all expressions have context-\emph{independent} denotations (i.e., languages with \term{strong transparency}), both autoregressive and masked language models successfully learn to emulate semantic relations between expressions. However, when denotations are changed to be context-\emph{dependent} with the language otherwise unmodified, this ability degrades. Turning to natural language, our experiments with a specific phenomenon---referential opacity---add to the growing body of evidence that current language models \revv{do not represent natural language semantics well}. We show this failure relates to the context-dependent nature of natural language form-meaning mappings.
\iftaclpubformat
\blfootnote{This work was done when Zhaofeng Wu was at AI2.}
\blfootnote{Our code and trained models are released at \url{https://github.com/ZhaofengWu/transparency}.}
\else
\blfootnote{Our code and trained models will be released at \url{https://github.com/anonymized_link}.}
\fi

\end{abstract}

\section{Introduction}

Despite language models' (LMs) centrality to recent progress on NLP benchmarks, a formal characterization of what can be learned from unsupervised training on large text corpora, and of what modern 
language models actually do learn, remains elusive.  
Empirically, \citet{tenney2018what}, \citet{kovaleva-etal-2019-revealing}, \citet{wu-etal-2021-infusing}, \emph{i.a.}, all discovered that pretrained LMs possess unsatisfactory semantic representations.
\citet{traylor-etal-2021-mean} found co-variation between form and meaning to be insufficient for an LM to represent lexical semantics.
\citet{li-etal-2021-implicit}, on the other hand, identified evidence of LMs representing dynamic semantics~\citep{Kamp2011,Heim2012,Groenendijk1991}.

From first principles, \citet{bender-koller-2020-climbing} argued that it is \emph{a priori} impossible for an ungrounded system that has access only to linguistic forms to learn the mapping between those forms and their grounded denotations. 
They claimed, as a thought experiment, that a learner that has access to all Java code (i.e., form) on GitHub can never learn execution (i.e., meaning). They nevertheless acknowledged that the existence of unit tests, which assert the expected output given input to blocks of code, could constitute a weak form of grounding which potentially enables the learning of meaning.

Formalizing this idea, \citet{merrill-etal-2021-provable} theoretically proved the possibility of learning (or more technically, emulating) semantic relations between expressions in a certain class of formal languages---those that are \term{strongly transparent} whose expressions have context-independent denotations---using an assertion oracle, analogous to the assertions in unit tests.
In addition, with an example, they showed the existence of non-emulatable languages even with an assertion oracle. 

Yet, the practical implications of these theoretical results have not been explored.
While assertions enable the emulation of strongly transparent languages, it is unclear if existing LM architectures and objectives \emph{achieve} emulation given training data with assertions.
Furthermore, we do not know if natural language (NL) is similarly non-emulatable as \citet{merrill-etal-2021-provable}'s constructed example, especially since non-transparency does not always imply non-emulatability.
We thus pose two research questions:
\begin{compactitem}
\item[\textbf{RQ1.}] 
Can current LM architectures and pretraining objectives emulate the meaning of strongly transparent languages?
\item[\textbf{RQ2.}] Can modern LMs fully emulate the meaning of natural language which is non-transparent?
\end{compactitem}

We answer RQ1 in the positive (\S\ref{sec:lm}):
on a strongly transparent propositional logic language,
autoregressive and masked language models pretrained on only expressions (form), à la GPT-2~\citep{gpt2} and RoBERTa~\citep{roberta}, can
consistently compare and evaluate their values (meaning).
\rev{We find that necessary grounding of the pretraining data distribution is crucial to this ability.}
We also investigate the role of transparency for emulatability in a controlled setting as an intermediate study before analyzing non-transparent natural language.
We ablate strong transparency from the logic language while keeping other factors unchanged.
We observe a substantial drop in the LMs' ability to emulate meaning, highlighting the importance of transparency for emulatability.

We then turn to natural language (\S\ref{sec:nl}). Referential opacity is an extensively studied phenomenon in semantics and philosophy~\citepia{quine1956quantifiers,kripke1972naming} but has not been examined in modern NLP. We prove that this phenomenon entails non-transparency
and analyze how well existing LMs represent it.
Our analyses based on probing and sentence similarity
point to a lack of its representation in the largest GPT-2 and BERT~\citep{bert} models (RQ2).
Theoretically, this is a natural language parallel to the emulation difficulty for our non-transparent formal language, and further reinforces the connection between transparency and meaning emulatability.
Practically, through the lens of strong transparency, our results supplement prior studies that identified pretrained LMs' insufficient semantic representations~\citepia{tenney2018what,yu-ettinger-2020-assessing,yu-ettinger-2021-interplay,wu-etal-2021-infusing}.

\section{Background}

We follow \citet{merrill-etal-2021-provable}'s operationalization of the learning of meaning by emulation and their definition of strong transparency. We summarize their nomenclature and theoretical results in this section and provide some examples. We refer readers to \citet{merrill-etal-2021-provable} for more details.

At a high level, we take \rev{an \emph{inferential} \citep[\S2.2.3]{sep-meaning}} view of meaning.
An LM is taken to understand a language $L$ if it can resolve semantic relations (e.g., equivalence) between expressions in $L$.\footnote{This \rev{inferentialist} perspective \rev{can be contrasted with} \emph{denotationalism}, \rev{which says that ``understanding'' is the task of mapping an expression to a logical representation of its meaning} \citep[\S2.2.3]{sep-meaning}. Inferentialism implicitly underlies
natural language inference-based evaluation of NLP models \citep[e.g.,][]{bowman-etal-2015-large}.} This is achieved through two procedures: $\mu_L$ maps expressions into representations based on training data from $L$, and $\delta$ uses the representations of two expressions to resolve a semantic relation between them.

\subsection{Languages}

We consider a \term{language} $L \subseteq \Sigma^*$ over an alphabet $\Sigma$ and denote $\left(\Sigma^*\right)^2=\Sigma^* \times \Sigma^*$. 
We term members of $L$ \term{sentences}. 
We consider an \term{expression} $e\in \Sigma^*$ with associated left and right \term{context} $\kappa=\dpt{l, r}\subseteq\left(\Sigma^*\right)^2$. $ler\in L$ is a sentence.
We denote the empty string with $\lambda$ and the empty context with $\lambda^2$.

\begin{definition}[$L_t$]\label{ex:logical}
We use the following context-free grammar (CFG) to specify a propositional logic language as a running example:
\begin{align} \label{eq:cfg}
\begin{split}
    S &\rightarrow (e \land e) \ | \ (e \lor e) \ | \ (\lnot e) \\
    e &\rightarrow (e \land e) \ | \ (e \lor e) \ | \ (\lnot e) \ | \ \texttt{T} \ | \ \texttt{F}
\end{split}
\end{align}
\revv{$S$ is the distinguished start symbol and} \texttt{T} and \texttt{F} stand for True and False.
We call this language $L_t$ where $t$ stands for ``transparent'' (see \S\ref{sec:strong-transparency}).
It underlies our investigation in \S\ref{sec:lm}.
\end{definition}

\revv{For example, the sentence $(((\lnot \texttt{T}) \lor \texttt{F})\lor(\lnot \texttt{T}))$ belongs to $L_t$ because it can be generated by this CFG using the steps illustrated in Figure~\ref{fig:logic}. In this sentence, the expression $\texttt{F}$ has context $\dpt{\ (((\lnot \texttt{T}) \lor \ \ , \ \ )\lor(\lnot \texttt{T}))\ }$.}

\subsection{Meaning} \label{sec:meaning}

We consider the denotation of an expression $e$, $\dnlc{e}{\kappa}$, to be its meaning in the context $\kappa$.\footnote{
We overload $L$ to represent both the surface form and a mapping between form and denotation.} We write $\dnlc{e}{\kappa}=\emptyset$ if $e$ is invalid in $\kappa$.

The meaning of a propositional logic expression can be the value derived from its conventional semantics, i.e., either \texttt{T} or \texttt{F}. 
\rev{For instance, $\dnc{(\texttt{T}\land(\lnot\texttt{F}))}{\lambda^2}_{L_t}=\texttt{T}$, and $\dnc{(\lnot\texttt{F})}{\dpt{(\texttt{T}\land \ , \ )}}_{L_t}=\texttt{T}$.}
For natural language, extensionally, the meaning of a sentence is its truth value, also either \texttt{T} or \texttt{F}~\citep{Frege1892}; intensionally, the meaning is its truth condition, which could be viewed as a set of possible worlds where the sentence is true~\citep{Carnap1947}.
For a summary of the extension and intension of other expressions in NL, see \citet[\S1.3]{kearns2011semantics}.
\rev{As an example in English, extensionally, $\llbracket $An author of this paper believes that Corgis are the cutest dogs.$ | \lambda^2 \rrbracket=\texttt{T}$.}

\subsection{Assertion Oracle} \label{sec:assert}

To represent assertions in unit tests, \citet{merrill-etal-2021-provable} considered an assertion oracle which outputs if two expressions have the same denotation under the same context.
Specifically, for expressions $e, e' \in \Sigma^*$ and $\kappa \in \left(\Sigma^*\right)^2$, the assertion oracle is defined as
\begin{align} \label{eq:assert}
\assert_L\left (e,e' \mid \kappa\right) = \begin{cases}
1 & \text{if } \dnlc{e}{\kappa} = \dnlc{e'}{\kappa} \\
0 & \text{otherwise}
\end{cases}
\end{align}

LM pretraining corpora could provide $\assert$-like signals. %
For instance, pretraining sequences of the form \texttt{e=e'} are a natural analog to an $\assert$ query. We adopt this view to pretrain our propositional logic language in \S\ref{sec:lm}. In English and many other natural languages, copulas are a straightforward counterpart: ``Corgis are the cutest dogs.'' is equivalent to ``Corgis\texttt{=}the cutest dogs.'' This can be further reduced to all propositions: ``Corgis run.'' is equivalent to $\assert$(Corgis run., \texttt{T}) under the extensional framework.\footnote{Assuming that propositions are more frequently true than false, which tends to be the case pragmatically~\citep{grice1975logic}.}

\subsection{$\assert$-emulation: Learning Meaning} \label{sec:emulation}

\citet{merrill-etal-2021-provable} say that a class of languages $\mathcal L$ is $\aleph$-emulatable if, intuitively, a learner $\mu_L$ with $\assert_L$-access produces context-independent representations that allow another function $\delta$ to check the equivalence of any two expressions under any context without further $\assert_L$-access.
Formally, $\mathcal L$ is $\assert$-emulatable if there exists an oracle Turing machine $\mu_L$ (that can query $\aleph_L$) and a standard Turing machine $\delta$ such that, for all $L \in \mathcal L$, context $\kappa \in \left(\Sigma^*\right)^2$, and valid expressions $e, e'$ in $\kappa$,
\begin{align} \label{eq:emulation}
    \hspace{-0.1cm}\dnlc{e}{\kappa} = \dnlc{e'}{\kappa} \Longleftrightarrow \delta\left(\mu_L(e), \mu_L(e') \mid \kappa\right)
\end{align}

\rev{Back to Corgis, an English learner $\mu$ can observe the equivalence of $e=$ ``Corgis'' and $e'=$ ``the cutest dogs'' in many different contexts $\kappa$ and develop their representations. We say that natural language is emulated if there exists $\delta$ that can decide the equivalence between such expressions from the representations alone.}

The standard pretraining-probing setup is an intuitive instantiation of $\mu_L$ and $\delta$. A model $\mu_L$ can query $\assert_L$ while pretraining on language $L$, which can then produce a representation $\mu_L(e)$ for any expression $e$. An equivalence probe $\delta$ can take the (frozen) representation of two expressions and decide their equivalence in some context. \rev{Importantly, because $\delta$ is frozen, it cannot make any more queries to $\assert_L$.} We adopt this paradigm for analysis in \S\ref{sec:lm} and \S\ref{sec:nl} and elaborate below.

\subsection{Strong Transparency} \label{sec:strong-transparency}

\begin{definition} \label{def:strong-transparency}
A language $L$ is \term{strongly transparent} if all of its expressions have context-independent denotations. That is, for all $e\in\Sigma^*, \kappa \in \left(\Sigma^*\right)^2$, either $\dnlc{e}{\kappa} = \dnlc{e}{\lambda^2} \ne \emptyset$ or $\dnlc{e}{\kappa} = \emptyset$.
\end{definition}

Under conventional propositional logic semantics, $L_t$ (Def.~\ref{ex:logical}) is strongly transparent because the value of every expression is determined by itself and unaffected by its context. Natural language, on the other hand, is non-transparent. We prove in \S\ref{sec:nl} that the NL phenomenon of referential opacity violates strong transparency.

\citet{merrill-etal-2021-provable} theoretically proved that all strongly transparent languages are $\assert$-emulatable. \rev{In other words, it is possible to learn to emulate the meaning of these languages with only assertion oracle access.} The converse is not necessarily true\footnote{Consider, for example, a finite non-transparent language whose denotation space can be learned by enumeration.} and hence there may be a weaker condition than strong transparency that also entails $\assert$-emulatability. %

In what follows, we study how their theoretical results realize empirically. We examine in \S\ref{sec:lm} if LM architectures and objectives can emulate the meaning of a strongly transparent language.
In \S\ref{sec:nl}, we return to natural language which is non-transparent and thus \citet{merrill-etal-2021-provable}'s results do not predict its meaning emulatability.

\section{How Well Do Language Models Fare?} \label{sec:lm}

While strongly transparent languages are in theory $\assert$-emulatable, it is unknown if existing LM architectures, coupled with their pretraining objectives, 
are able to successfully achieve $\assert$-emulation, or more intuitively, to learn their meaning.

To test this, we synthetically create a strongly transparent language based on propositional logic.
We pretrain LMs with the same architecture and similar data scale as GPT-2 and RoBERTa on a generated pretraining corpus.
We then train an equivalence probe
to study if the pretrained representations enable $\assert$-emulation. The probe is trained with a sentence pair binary classification objective and tested on unseen sentences sampled from the same grammar.
Alternatively, we also try to directly evaluate the value of unseen sentences, without probe training.
To isolate the effect of strong transparency, we also minimally perturb this language to be non-transparent and study how this affects emulatability.

\subsection{Data} \label{sec:datasets}
We use a PCFG to construct our propositional logic dataset because its recursive nature and context-freeness bear some resemblance to natural language,\footnote{\rev{There are aspects of natural language that a PCFG does not capture, such as recursion constraints~\citep{Karlsson+2010+43+68} and non-context-free phenomena~\citep{Shieber1985EvidenceAT}. Nevertheless, the goal of this research question is not to maximally simulate NL,
but rather investigate the distributional learnability of compositional semantics.
Future work could investigate the effect of moving away from a strict PCFG.
}} \rev{and because it is convenient for sampling}. The rules are specified in Eq.~\ref{eq:cfg} and the probabilities are hand-designed.
The denotation of an expression can be computed according to the conventional semantics of propositional logic,
which, as argued in \S\ref{sec:strong-transparency}, makes $L_t$ transparent.
Figure~\ref{fig:logic} shows an example.
See \S\ref{sec:details-pl} for more details.

Our CFG rules prevent the atomic sentences \texttt{T} and \texttt{F} from occurring in the corpus (and $($\texttt{T}$)$ and $($\texttt{F}$)$ too)
and only allow compositional sentences.
This ensures the absence of pretraining sequences like \texttt{sentence=T} and guarantees that there is no direct grounding to denotations during pretraining, but only indirect grounding via $\assert$. This makes the task more difficult than the $\assert$-emulation setup but more realistically transferable to natural language~(\S\ref{sec:discussion}).

\begin{figure}[t!]
    \begin{forest}
        for tree={fit=band,inner sep=1.8pt,l=6pt,s sep=2pt, align=center},
        [$S$\vspace{-3pt}\\\vspace{-3pt}\shadedbox{{\footnotesize $L_t$ \texttt{F}}\\{\footnotesize $L_n$ \texttt{T}}}
            [(, for tree={s sep=0pt}]
            [$e$\\\vspace{-3pt}\shadedbox{{\footnotesize $L_t$ \texttt{F}}\\{\footnotesize $L_n$ \texttt{T}}} [(] [$e$\vspace{-3pt}\\\vspace{-3pt}\shadedbox{{\footnotesize $L_t$ \texttt{F}}\\{\footnotesize $L_n$ \texttt{T}}} [(] [$\lnot$] [$e$\vspace{-3pt}\\\vspace{-3pt}\shadedbox{{\footnotesize $L_t$ \texttt{T}}\\{\footnotesize $L_n$ \texttt{F}}} [\coloredbox{red}{\texttt{T}}] ] [)] ] [$\lor$] [$e$\vspace{-3pt}\\\vspace{-3pt}\shadedbox{{\footnotesize $L_t$ \texttt{F}}\\{\footnotesize $L_n$ \texttt{F}}} [\texttt{F}, name=n1] ] [)] ]
            [$\lor$, for tree={s sep=0pt}]
            [\coloreddashedbox{blue}{$e$}\\\vspace{-3pt}\shadedbox{{\footnotesize $L_t$ \texttt{F}}\\{\footnotesize $L_n$ \texttt{F}}} [(] [$\lnot$] [$e$\vspace{-3pt}\\\vspace{-3pt}\shadedbox{{\footnotesize $L_t$ \texttt{T}}\\{\footnotesize $L_n$ \texttt{T}}} [\texttt{T}, name=n2] ] [)] ]
            [), for tree={s sep=0pt}]
        ]
    \end{forest}
	\caption{An example sentence in our propositional logic language as specified in Eq.~\ref{eq:cfg}. The \coloreddashedbox{blue}{$e$} node c-commands the \coloredbox{red}{\texttt{T}} node, inverting its meaning in $L_n$~\revv{(\S\ref{sec:non-transparency})}. We \shadedbox{mark} the denotation of each node under $L_t$ or $L_n$.
	}
    \vspace{-10pt}
	\label{fig:logic}
\end{figure}

The dataset has 819.2M pretraining sequences and 1M/10K/10K probe training/validation/test sentence pairs. All splits have disjoint sentences. The average sentence length is around 48.6. \S\ref{sec:details-pl} contains more details including tokenization.

\subsection{Pretraining} \label{sec:pretraining}
We pretrain from scratch an autoregressive LM (\textbf{ALM}) and a masked LM (\textbf{MLM}), respectively simulating GPT-2-small and RoBERTa-base\footnote{We do not follow BERT because next sentence prediction is not applicable here, but they are otherwise similar. \label{fn:no-bert}
} with their original architecture, objective, and, to the extent possible, hyperparameters.
They have near-identical model size hyperparameters, leading to 86.8M ALM parameters and 87.0M for MLM.
We sample sentence pairs (\texttt{a}, \texttt{b}) with the same denotation and
format the pretraining sequences in the form of \texttt{a=b}, such as \texttt{$($T$\land$F$)$=$($F$\lor$F$)$}, simulating $\assert$-access (but restricting queries to be sentences, a more challenging setup: see Eq.~\ref{eq:assert}).
\S\ref{sec:lm-probing} will discuss a necessary form of data augmentation.
We train for 100K steps, 20\% of RoBERTa-base's training duration and hence data size, which we found sufficient for convergence on our data.
\S\ref{sec:training-details-pl} summarizes hyperparameters.

\subsection{Analysis: Probing $L_t$} \label{sec:lm-probing}

\rev{Probing is a commonly adopted method to quantify the extent to which a representation encodes a particular type of linguistic information~\citepia{alain2017understanding,liu-etal-2019-linguistic,hewitt-manning-2019-structural}.
The representation is frozen, on top of which a lightweight classifier is trained to predict the information of interest.
As shown in \S\ref{sec:emulation}, this paradigm conveniently corresponds to the formalization in \citet{merrill-etal-2021-provable}, and hence we use it to investigate whether or not pretrained representations encode sufficient semantic information for equivalence decisions.}

We probe semantic equivalence from the pretrained models for pairs of \emph{unseen} sentences. We embed each sentence separately through the pretrained model, taking the last token representation for ALM and the average for MLM.\footnote{\rev{The lack of a next sentence prediction task (Fn.~\ref{fn:no-bert}) leads to no supervision for a [CLS] token.}} \citet{voita-etal-2019-bottom} and \citet{haviv-etal-2022-transformer} have shown that the positional information is diluted at the top transformer layers of MLMs, but it is crucial for the truth value in our language. We, therefore, take a weighted sum (a.k.a. scalar mix) of all layers for compensation for MLM.\footnote{Formally, $\mu$'s output contains all layer representations.} We also found that these 
simple
methods for sentence representations sometimes do not perform well. We hence additionally consider a variant where the probe is an attention-weighted mixture of all token positions.
We refer to these two representations as \noattnprobe and \attnprobe, respectively. See \S\ref{sec:training-details-pl} for more on their details.
We train a bilinear classifier probe on top of the sentence representations~\citep{li-etal-2021-implicit} and
evaluate it with accuracy on a held-out test set.
\rev{For each setting, we train the same probe with five different random seeds and report their mean and standard deviation.}
We report hyperparameters in \S\ref{sec:training-details-pl}.

\begin{table}[t!]
    \centering
    \begin{tabular}{@{\hspace{1pt}} l@{\hspace{4pt}}cccc @{\hspace{1pt}}}
        \toprule
        &
        \multicolumn{2}{c}{\textbf{ALM} {\footnotesize (à la GPT-2)}} & \multicolumn{2}{c}{\textbf{MLM} {\footnotesize (à la RoBERTa)}} \\
        \cmidrule(lr){2-3} \cmidrule(lr){4-5}
        & \textbf{Random} & \textbf{Trained} & \textbf{Random} & \textbf{Trained} \\
        \midrule
        & \multicolumn{4}{c}{\emph{Probing:} \noattnprobe} \\
        $L_t$ & 49.9\rev{$_{\pm 0.3}$} & \phantom{0}98.8\rev{$_{\pm 0.0\phantom{0}}$} & 50.0\rev{$_{\pm 0.4}$} & 50.1\rev{$_{\pm 0.2}$} \\
        $L_n$ & 50.0\rev{$_{\pm 0.3}$} & \phantom{0}79.9\rev{$_{\pm 0.2\phantom{0}}$} & 49.9\rev{$_{\pm 0.1}$} & 49.5\rev{$_{\pm 0.1}$} \\
        \midrule
        & \multicolumn{4}{c}{\emph{Probing:} \attnprobe} \\
        $L_t$ & 49.9\rev{$_{\pm 0.6}$} & 100.0\rev{$_{\pm 0.0\phantom{0}}$} & 50.0\rev{$_{\pm 0.4}$} & 63.8\rev{$_{\pm 1.7}$} \\
        $L_n$ & 50.1\rev{$_{\pm 0.4}$} & \phantom{0}82.5\rev{$_{\pm 20.9}$} & 50.2\rev{$_{\pm 0.2}$} & 49.7\rev{$_{\pm 0.3}$} \\
        \midrule
        & \multicolumn{4}{c}{\emph{Direct evaluation}} \\
        $L_t$ & 50.0 & 97.0$_{\pm 6.8\phantom{0}}$ & 50.0 & 95.4$_{\pm 4.7}$ \\
        $L_n$ & 50.0 & 91.1$_{\pm 19.9}$ & 50.0 & 50.4$_{\pm 0.8}$ \\
        \bottomrule
    \end{tabular}
    \caption{\label{tab:probing}
    Probing and direct evaluation accuracy (\%) on random and pretrained models with autoregressive and masked LMs on our propositional logic test set.
    \revv{We report the results with both our transparent language $L_t$ and the perturbed language $L_n$~(\S\ref{sec:non-transparency}).}
    Probing checks the equivalence of two sentences, while direct evaluation computes the value of one sentence.
    For probing, we test two ways to obtain sentence representations, \rev{reporting the mean and standard deviation across five probe training seeds.}
    For direct evaluation, we report the mean and standard deviation across our five templates.\footnotemark
    }
    \vspace{-10pt}
\end{table}

\footnotetext{\rev{ALM Trained \attnprobe $L_n$ has a degenerate seed that led to around 50\% accuracy, hence the large variance. It is possible that additional configuration-specific hyperparameter tuning, which we did not perform, could reduce this instability.}}

\rev{Past work has cast doubt on whether probes faithfully reflect the representation's encoding of the information of interest, or if they directly learn the task~\citep{hewitt-liang-2019-designing}.
This is an especially important issue here as our \attnprobe sentence representation injects additional trainable parameters compared to a simple (bi)linear classifier.
To answer this question in our setting, we follow previous studies~\citepia{conneau-etal-2018-cram,tenney2018what, wu-etal-2021-infusing} and}
train a randomly initialized and similarly frozen control model with the same architecture: if LMs emulate meaning similarly to  \citet{merrill-etal-2021-provable}'s algorithm, we would expect the pretrained model to yield higher probing accuracy than the random model.

\paragraph{Results.}
The $L_t$ rows in the top two sections of Table~\ref{tab:probing} summarize the results. With a simple sentence representation (\noattnprobe), the pretrained ALM achieves near-perfect probing accuracy for $L_t$, though MLM performs at chance level.
An attention-based sentence representation enables 63.8\% accuracy\footnote{With additional linear layers, it could go up to 83.4\%\rev{$_{\pm 2.0}$} while the random model still performs at chance level. We did not include this in Table~\ref{tab:probing} for consistency with other settings.} for MLM and improves ALM's performance to 100\%.
Importantly, in this variant, the random baselines still perform at chance level, demonstrating that the additional parameters do not lead to an overly powerful probe.
We discuss the accuracy differences between ALM and MLM in \S\ref{sec:discussion}.
These results demonstrate that pretraining enables meaning emulation, though the meaning representation can be more deeply encoded than what can be extracted with a (bi)linear probe.
We note that it is expected that the performance of pretrained models does not reach 100\%. While \citet{merrill-etal-2021-provable} showed its theoretical possibility, their setup assumes active learning with unlimited access to $\assert$ and allows the ``probe'' $\delta$ to be an arbitrarily powerful function, among other differences.

\begin{table}[t!]
    \centering
    \begin{tabular}{@{} ccc @{}}
        \toprule
        & \textbf{--Reflexivity} & \textbf{+Reflexivity} \\
        \midrule
        \multirow{2}{*}{\textbf{--Symmetry}} & \texttt{a=b} & \texttt{a=b}, \texttt{a=a}, \texttt{b=b} \\
        & 50.5\rev{$_{\pm 0.4}$} & 92.7\rev{$_{\pm 0.1}$} \\
        \multirow{2}{*}{\textbf{+Symmetry}} & \texttt{a=b}, \texttt{b=a} & \texttt{a=b}, \texttt{b=a}, \texttt{a=a}, \texttt{b=b} \\
        & 50.3\rev{$_{\pm 0.3}$} & 98.8\rev{$_{\pm 0.0}$} \\
        \bottomrule
    \end{tabular}
    \caption{\label{tab:grounding} ALM probing accuracy (\noattnprobe; \%) on our propositional logic test set with pretraining data with different properties, where \texttt{a}, \texttt{b} are expressions in $L_t$.
    \rev{We report the mean and standard deviation across five probe training seeds.}
    }
    \vspace{-10pt}
\end{table}

\paragraph{Grounding.}
We found that independently sampling pretraining sequences results in unsuccessful emulation with probing performance at random.
Instead, it is crucial to ground \texttt{=} %
with reflexivity and symmetry.\footnote{
Reflexivity states that $a=a$,
and symmetry $a=b \Rightarrow b=a$.
Equality further requires transitivity: $a=b \land b=c \Rightarrow a=c$,
but it is not tested in our probing setup and
we found it unimportant for probing accuracy in preliminary experiments.
}
We achieve this by augmenting the pretraining data: if \texttt{a=b} is a pretraining sequence,
we ensure \texttt{a=a}, \texttt{b=b} (reflexivity), and \texttt{b=a} (symmetry) are too.
This imposes a constraint on the pretraining data distribution that eases the learning of \texttt{=}'s meaning.
Table~\ref{tab:grounding} shows that both properties are important.
We consider the implication in \S\ref{sec:discussion}.

\subsection{Analysis: Direct Evaluation on $L_t$} \label{sec:direct-eval}

The process of training a probe introduces additional complexity, such as $\pm$\textsc{Attn}, that potentially complicates our analysis.
Therefore, we also test a stronger condition where there is no additional classifier: can the pretrained models \emph{evaluate} expressions, without any further training (e.g., a probe)?
For MLM, it is the most straightforward to compare if the model assigns a higher probability to \texttt{T} or \texttt{F} in \texttt{sentence=[MASK]}. However, this is a sequence that never occurs in the pretraining corpus since a standalone \texttt{T} or \texttt{F} is not part of our language (Eq.~\ref{eq:cfg}).
Therefore, we use five templates on the right-hand side that are minimal in our language: $($\texttt{T$\land$[MASK]}$)$, $($\texttt{F$\lor$[MASK]}$)$, $($\texttt{[MASK]$\land$T}$)$, $($\texttt{[MASK]$\lor$F}$)$, $($\texttt{$\lnot$[MASK]}$)$.
\rev{For the first four templates, we expect the masked position to be filled with the truth value of the proposition, and the negated value for the last one.}
For ALM, we compare if the model assigns a higher probability to the sequence where \texttt{[MASK]} is filled in with \texttt{T} vs. \texttt{F}.

\paragraph{Results.}
The bottom section of Table~\ref{tab:probing} shows the mean and standard deviation of the evaluation accuracy across our five templates. Without training, a random model always has 50.0\% accuracy on expectation.
Both ALM and MLM achieve a high evaluation accuracy, above 95\%, corroborating the LMs' capability to represent the meaning of $L_t$.

These results respond to the argument in \citet{bender-koller-2020-climbing}:
\begin{displayquote}
We let GPT-2 complete the simple arithmetic problem \emph{Three plus five equals}. The five responses below [...] show that this problem is beyond the current capability of GPT-2, and, we would argue, any pure LM.
\end{displayquote}
We showed that form-only supervision \emph{does} allow such evaluation on a strongly transparent language\rev{, at least when the supervising data distribution satisfies symmetry and reflexivity}.

\subsection{Non-transparency} \label{sec:non-transparency}
Building towards non-transparent natural language, it is important to understand strong transparency's effect on emulatability.
We design a minimally perturbed version of $L_t$ that is non-transparent, $L_n$.
The syntax stays the same, but we change the semantics such that $\lnot$ has a side effect: when followed by \texttt{T} or \texttt{F}, it inverts the meaning of these literals that occur in certain other environments. Specifically, each $(\lnot \texttt{T})$ node changes the meaning of all the literals \texttt{T} in its c-commanded subtree (i.e., the $e$ subtree headed by the $(\lnot \texttt{T})$ node's sibling, if there is one; \citealp{reinhart1976syntactic}) to \texttt{F}.
An additional $(\lnot \texttt{T})$ does not invert back.
Similarly, $(\lnot \texttt{F})$ changes the meaning of the literal \texttt{F} to \texttt{T}.
For example, in the sentence in Figure~\ref{fig:logic}, the \coloredbox{red}{\texttt{T}} node is c-commanded by (or, a descendant of a sibling of) the \coloreddashedbox{blue}{$e$}$\rightarrow (\lnot \texttt{T})$ node, so its meaning is changed to \texttt{F}.
On the other hand, the $e \rightarrow (\lnot$\coloredbox{red}{\texttt{T}}$)$ node does not invert the meaning of the unboxed \texttt{T} because they do not constitute a c-command relation.
This alternation is inspired by binding theory in generative grammar~\citep{chomsky1981lectures,chomsky1983some}, where the $(\lnot \texttt{T})$ node is the binder that c-commands the bindee.
Since the meaning of \texttt{T} and \texttt{F} now depends on the existence of a binder, $L_n$ is non-transparent.\footnote{This is a straightforward way to introduce a $\lnot$ with side effect to a hierarchical structure. An alternative is to rely on a linear structure and inverts all literals linearly following $\lnot$. Nevertheless, our version leverages the hierarchical reasoning that the model originally needs to possess to evaluate an expression, while this version requires a new type of reasoning that is linear. So that change would be less minimal.}

\paragraph{Results.}
We conduct the same pretraining/probing/direct evaluation procedure on $L_n$.
Table~\ref{tab:probing} reports the results.
Non-transparency decreases ALM's probing accuracy with both \noattnprobe and \attnprobe, though not to random level.
\rev{The variance across different probe training seeds also increases compared to $L_t$, indicating that the pretrained representation is less robust.}
Directly evaluating ALM with $L_n$ similarly leads to both decreased average accuracy and increased variance.
MLM, on the other hand, achieves random probing and evaluation accuracy.
Overall, the lack of strong transparency reduces models' meaning emulation ability, though not always to chance performance.

\section{What About Natural Language?} \label{sec:nl}
While existing LM architectures and objectives are able to emulate the meaning of synthetic languages, it is unclear how these observations transfer to natural language (NL). \citet{merrill-etal-2021-provable} hinted that, since NL is non-transparent and likely more complex than their constructed non-emulatable language, it is probable that a pretraining procedure, even with $\assert$-access, cannot emulate its meaning either. This, however, remained an untested hypothesis.

We formalize this intuition and prove that a specific NL phenomenon, referential opacity, makes NL non-transparent.\footnote{Deictic expressions are another example, though they have been extensively studied under coreference resolution.} This phenomenon has been widely studied in semantics~\citepia{quine1956quantifiers,kripke1972naming}, yet it has received little attention in modern NLP. We fill this gap from the perspective of strong transparency and study the representation of this phenomenon in modern LMs with a probing-based and a sentence similarity-based analysis.

\subsection{Referential Opacity}

To illustrate referential opacity, we use the classic example in semantics:

\begin{example} \label{ex:superman}
\phantom{ }
\begin{exe} \vspace{-5pt}
    \exi{(a)} Lois Lane believes Superman is a hero.
    \exi{(b)} Lois Lane believes Clark Kent is a hero.
\end{exe}
\end{example}

Note that (a) and (b) have different truth conditions: their truth values differ if Lois Lane does not know Superman and Clark Kent are the same person.
Formally, $\dntc{Lois Lane believes Superman is a hero.}{\lambda^2}\ne\dntc{Lois Lane believes Clark Kent is a hero.}{\lambda^2}$.\footnote{In this section we consider the language $L$ to be English, or any NL that exhibits this phenomenon, and $\dnc{\cdot}{\cdot}$ to be intensions~(\S\ref{sec:meaning}). We drop the subscript $L$ for brevity.}
On the other hand, $\dntc{Superman}{\lambda^2}=\dntc{Clark Kent}{\lambda^2}$.\footnote{It is possible to argue that $\dntc{Superman}{\lambda^2}\ne\dntc{Clark Kent}{\lambda^2}$ if we consider their intension to be different. Nevertheless, we adopt the view of \citet[\S12.3]{kratzer1998semantics} to not introduce intensionality by default (i.e., with $\kappa=\lambda^2$), but rather it is evoked by context: ``The usual denotations are extensions. But for nonextensional contexts, Intensional Functional Application allows a switch to intensions. The switch is triggered by particular lexical items [...]''.}
In other words, two expressions that have the same denotation, when embedded in the same context, yield sentences with different truth conditions.
Such contexts are called \textbf{referentially opaque}, and, in this case, they are induced by a propositional attitude verb ``believes'' whose meaning depends on the cognitive state of its subject~\citep{propsitional_attitude}.

Now we formalize referential opacity:

\begin{definition}
In natural language, an expression $e$ is contextually valid in $\kappa = \dpt{l,r}$ if none of $\dnc{l}{\lambda,er}, \dnc{e}{l,r}, \dnc{r}{le,\lambda}$ is $\emptyset$.\footnote{This is a technical detail needed for proving Theorem~\ref{thm:our-lovely-theorem}.}
\end{definition}

\begin{definition} \label{def:opacity}
A context $\kappa = \dpt{l,r}$ in natural language is \term{referentially opaque} if there exist expressions $e_1, e_2$, both contextually valid in $\kappa$, such that $\dnc{e_1}{\lambda^2} = \dnc{e_2}{\lambda^2}$ and $\dnc{le_1r}{\lambda^2} \ne \dnc{le_2r}{\lambda^2}$.
\end{definition}

Def.~\ref{def:opacity} matches the linguistic phenomenon: let $e_1$=``Superman'', $e_2$=``Clark Kent'', and the opaque context $\kappa$=$\dpt{\text{``Lois Lane believes''},\text{``is a hero.''}}$, and we recover our analysis of Ex.~\ref{ex:superman} above.

Now, we prove that %
the existence of referentially opaque contexts implies non-transparency. 
We assume compositionality, for which we provide a working definition: $\dnc{ler}{\lambda^2}=f(\dnc{l}{\lambda,er}, \dnc{e}{l,r}, \dnc{r}{le,\lambda})$ for some meaning composition function $f$.\footnote{This is a mild assumption, considering the generality of compositionality~\citep{fodor1988} and that our definition is weak, e.g., weaker than \citet{andreas2018measuring}'s.
}
\rev{Intuitively, the proof shows that if all expressions have fixed meaning (i.e., are strongly transparent), referential opacity would not arise.}

\begin{theorem} \label{thm:our-lovely-theorem}
A compositional language with referentially opaque contexts is not strongly transparent.
\end{theorem}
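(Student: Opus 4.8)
The plan is a proof by contradiction driven entirely by the compositionality hypothesis. Suppose the language were strongly transparent, and fix a referentially opaque context $\kappa = \dpt{l,r}$ together with the witnesses $e_1, e_2$ guaranteed by Def.~\ref{def:opacity}: both are contextually valid in $\kappa$, they satisfy $\dnc{e_1}{\lambda^2} = \dnc{e_2}{\lambda^2}$, and yet $\dnc{le_1r}{\lambda^2} \ne \dnc{le_2r}{\lambda^2}$. The goal is to show that strong transparency would force the two whole-sentence denotations to coincide, contradicting this last inequality.

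The first step is to promote every local denotation appearing in the compositional decomposition from its in-context value to its empty-context ($\lambda^2$) value. Here contextual validity does the essential work: validity of $e_i$ in $\kappa$ guarantees that none of $\dnc{l}{\lambda, e_i r}$, $\dnc{e_i}{l,r}$, $\dnc{r}{le_i, \lambda}$ equals $\emptyset$. Because each is nonempty, strong transparency selects its first branch and equates it with the corresponding empty-context denotation, yielding $\dnc{e_i}{l,r} = \dnc{e_i}{\lambda^2}$, $\dnc{l}{\lambda, e_i r} = \dnc{l}{\lambda^2}$, and $\dnc{r}{le_i, \lambda} = \dnc{r}{\lambda^2}$ for each $i \in \{1,2\}$.

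The second step is to match the three arguments of the composition function $f$ across the two sentences. The left and right arguments agree immediately, since $\dnc{l}{\lambda, e_1 r} = \dnc{l}{\lambda^2} = \dnc{l}{\lambda, e_2 r}$ and $\dnc{r}{le_1, \lambda} = \dnc{r}{\lambda^2} = \dnc{r}{le_2, \lambda}$. For the middle argument, I combine $\dnc{e_i}{l,r} = \dnc{e_i}{\lambda^2}$ with the opacity hypothesis $\dnc{e_1}{\lambda^2} = \dnc{e_2}{\lambda^2}$ to obtain $\dnc{e_1}{l,r} = \dnc{e_2}{l,r}$. Hence all three inputs to $f$ coincide, and invoking compositionality, $\dnc{le_ir}{\lambda^2} = f(\dnc{l}{\lambda, e_i r}, \dnc{e_i}{l,r}, \dnc{r}{le_i, \lambda})$, gives $\dnc{le_1r}{\lambda^2} = \dnc{le_2r}{\lambda^2}$, the desired contradiction.

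The one point that requires genuine care — and which I expect to be the only real obstacle — is the bookkeeping around $\emptyset$. Strong transparency equates an in-context denotation with its $\lambda^2$ value only when the former is nonempty; if any of the three sub-denotations were $\emptyset$, the equality branch would not apply and the whole argument would stall. This is precisely why the statement is about contextually valid witnesses, and why the definition of contextual validity is phrased to assert nonemptiness of all three pieces $\dnc{l}{\lambda, er}$, $\dnc{e}{l,r}$, $\dnc{r}{le,\lambda}$. Once these nonemptiness conditions are in hand, every remaining step is a direct substitution, so I anticipate no further difficulty.
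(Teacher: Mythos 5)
Your proposal is correct and takes essentially the same route as the paper's proof: a contradiction obtained by decomposing $\dnc{le_ir}{\lambda^2}$ via compositionality, using strong transparency (justified by contextual validity) to replace each in-context sub-denotation with its empty-context value, and then using the premise $\dnc{e_1}{\lambda^2} = \dnc{e_2}{\lambda^2}$ to equate the middle arguments of $f$, forcing $\dnc{le_1r}{\lambda^2} = \dnc{le_2r}{\lambda^2}$. The only difference is presentational: you spell out the nonemptiness bookkeeping that the paper's chain of equalities leaves implicit (and relegates to a footnote on the definition of contextual validity), which is a fair point of care but not a different argument.
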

\begin{proof}
Suppose by contradiction we have such a language $L$ that is strongly transparent. Let $e_1, e_2$ be expressions in some opaque context $\dpt{l,r}$ in $L$.
\begin{align*}
    \dnc{le_1r}{\lambda^2} &=& f\left(\dnc{l}{\lambda,e_1r}, \dnc{e_1}{l,r}, \dnc{r}{le_1,\lambda}\right) \\ && \text{By compositionality} \\
    &=& f\left(\dnc{l}{\lambda,e_2r}, \dnc{e_1}{\lambda^2}, \dnc{r}{le_2,\lambda}\right) \\ && \text{By strong transparency} \\
    &=& f\left(\dnc{l}{\lambda,e_2r}, \dnc{e_2}{\lambda^2}, \dnc{r}{le_2,\lambda}\right) \\ && \text{By referential opacity premise} \\
    &=& f\left(\dnc{l}{\lambda,e_2r}, \dnc{e_2}{l,r}, \dnc{r}{le_2,\lambda}\right) \\ && \text{By strong transparency} \\
    &=& \dnc{le_2r}{\lambda^2} \\ && \text{By compositionality}
\end{align*}
This violates $\dnc{le_1r}{\lambda^2} \neq \dnc{le_2r}{\lambda^2}$, the referential opacity premise.
So $L$ is not strongly transparent.
\end{proof}

Therefore, as a non-transparent example in NL, we study whether referential opacity is reflected in the representation of current LMs.

\subsection{Data} \label{sec:nl-data}

We cast referential opacity as a sentence pair binary classification problem.
We generate sentence pairs like Ex.~\ref{ex:superman} as our dataset.
Ex.~\ref{ex:superman} consists of two parts that correspond to the two conditions in Def.~\ref{def:opacity}: two co-referring expressions ($\dnc{e_1}{\lambda^2} = \dnc{e_2}{\lambda^2}$), and a referentially opaque context that embeds the entity ($\dnc{le_1r}{\lambda^2} \ne \dnc{le_2r}{\lambda^2}$).
Next, we separately introduce how we generate them.
Our final dataset consists of 45K/6K/6K training/development/testing sentence pairs for GPT-2 and 97K/12K/12K for BERT.
\S\ref{sec:nl-dataset-details} provides more details, including more fine-grained dataset statistics for different experimental settings below.

\paragraph{Co-referring expressions.}
The co-referring expressions in Ex.~\ref{ex:superman} are proper names, ``Superman'' and ``Clark Kent.''
Not only is this hard to collect data for, but, due to the rigidity of proper names~\citep{kripke1972naming}, it is also theoretically more challenging to analyze as the classic intensionality framework is more difficult to apply (\citealp{von2011intensional}).\footnote{Though see \citet{shabasson2018two} for a theorization.}
We hence consider co-referring expressions that are one proper name and one definite description, such as ``Yuri Gagarin'' and ``the first person in space,'' which can be more straightforwardly accounted for with intensionality~(\citealp[\S12]{kratzer1998semantics}; \citealp{von2011intensional}).
We use the LAMA dataset~\citep{petroni-etal-2019-language}, specifically the T-REx split~\citep{elsahar-etal-2018-rex} following recent factual probing work~\citep{jiang-etal-2020-know,shin-etal-2020-autoprompt,zhong-etal-2021-factual}, to obtain a list of such entities. To make sure the model representation captures the coreference,
we follow \citet{petroni-etal-2019-language} and use LAMA to prompt the LM with these equivalences and only keep entities that are correctly predicted.\footnote{Previous work~\citep{poerner-etal-2020-e,dufter-etal-2021-static,cao-etal-2021-knowledgeable} questioned whether such prompting measures model ``understanding.'' Our setup, though, does not depend on ``understanding'', but only requires association.}

\paragraph{Contexts.}
We construct referentially opaque and referentially transparent contexts to embed these co-referring expressions. We only consider referential opacity involving propositional attitude verbs, where the context is referentially opaque iff its main verb conveys propositional attitude. There are other types of referential opacity, such as counterfactuals~(\citealp{von2011intensional}; \citealp[\S7]{kearns2011semantics}) and substitutions that shift the syntactic status of constituents~\citep[e.g.,][]{fine1990}, that we omit in this work for simplicity, though they could be targets of future studies. We manually design two classes of templates, depending on the verb's argument structure. The first has an embedded clause, e.g.,
\begin{example}
Label = non-equivalent\footnote{
Consider, for example, if this person is Yuri's neighbor and wants to meet him for dinner, but, being an avid flat-earther, is not fond of space traveling and is unaware that he has been to space. She would say she wants to meet Yuri Gagarin but has no interest in meeting the first person in space.
}
\begin{exe} \vspace{-5pt}
    \exi{(a)} She wants to meet Yuri Gagarin.
    \exi{(b)} She wants to meet the first person in space.
\end{exe}
\end{example}
\noindent The second contains only the main clause, such as
\begin{example}
Label = equivalent
\begin{exe} \vspace{-5pt}
    \exi{(a)} He speaks Lao.
    \exi{(b)} He speaks the official language of Laos.
\end{exe}
\end{example}
\noindent The two sentences in a pair only differ by the entity reference: one is a name and one is a definite description.
A sentence pair is non-equivalent iff it has a referentially opaque context, or \rev{within our scope of study}, iff its main verb is a propositional attitude verb.
We gather the list of verbs from past linguistic studies and verify with native speaker judgment (see \S\ref{sec:nl-dataset-details}).

\subsection{Models}

We consider GPT-2-XL and BERT-large-cased\footnote{Not RoBERTa as in \S\ref{sec:lm}, because BERT's [CLS] token can act as and is commonly taken to be the sentence representation~\citepia{bert,karpukhin-etal-2020-dense}.}, the largest variants in these two families, as representative autoregressive and masked LMs. They have 1.5B and 340M parameters, respectively. We obtain sentence representations in the same way as in \S\ref{sec:lm}, except without attention-weighting and simply using the [CLS] embedding for BERT.

\subsection{Analysis: Probing} \label{sec:nl-probing}

\begin{table}[t!]
    \centering
    \begin{tabular}{ @{\hspace{4pt}} ll cc @{\hspace{4pt}} }
        \toprule
        && \textbf{GPT-2} & \textbf{BERT} \\
        \midrule
        \multirow{3}{*}{Simple} & 
        Equiv. & 100.0$_{\pm 0.00}$ & 100.0$_{\pm 0.00}$ \\
        & Non-equiv. & 100.0$_{\pm 0.00}$ & 100.0$_{\pm 0.00}$ \\
        & Overall & 100.0$_{\pm 0.00}$ & 100.0$_{\pm 0.00}$ \\
        \midrule
        \multirow{3}{*}{Coord.}  & Equiv. & \phantom{0}85.3$_{\pm 0.03}$ & \phantom{0}72.4$_{\pm 0.03}$ \\
        & Non-equiv. & \phantom{0}15.5$_{\pm 0.03}$ & \phantom{0}29.6$_{\pm 0.03}$ \\
        & Overall & \phantom{0}50.4$_{\pm 0.00}$ & \phantom{0}51.0$_{\pm 0.00}$ \\
        \bottomrule
    \end{tabular}
    \caption{\label{tab:nl-probe} Probing accuracy (\%) for referential opacity on GPT-2-XL and BERT-large-cased. We report the mean and standard deviation across 10 seeds. We consider two types of sentences, simple sentences without attractors and coordinated sentences with attractors. For each type, we show both the label-specific accuracy (Equivalent/Non-equivalent) and the overall accuracy.
    }
    \vspace{-10pt}
\end{table}

We use the same bilinear probe in \S\ref{sec:lm} as a binary classifier over sentence pairs, determining the equivalence, or the referential transparency, of each pair. However, because of the lexicalized nature of referential opacity, the probe could easily overfit and recognize not their equivalence but the existence of a propositional attitude verb.

To overcome this, we introduce attractors~\citepia{linzen-etal-2016-assessing,gulordava-etal-2018-colorless,pandia-ettinger-2021-sorting}.\footnote{\rev{Another option is to have disjoint training and testing verbs. This did not work in preliminary experiments because verbs that induce referential opacity are semantically closer, as they always convey propositional attitude. So the model could use this similarity in the word embedding space to extrapolate.}}
\rev{We always conjoin a clause with a propositional attitude verb and one with a non-attitude verb, disallowing the aforementioned heuristics.
The equivalence label now depends on if the entity alternation occurs under the non-attitude verb, which would result in an equivalent sentence pair, or the attitude verb, which would lead to non-equivalence.}
For example:
\begin{example} \label{ex:attractor-eq}
Label = equivalent
\begin{exe} \vspace{-4pt}
    \exi{(a)} He speaks Lao and she wants to meet Yuri Gagarin.
    \exi{(b)} He speaks the official language of Laos and she wants to meet Yuri Gagarin.
\end{exe}
\end{example}
\begin{example} \label{ex:attractor-neq}
Label = non-equivalent
\begin{exe} \vspace{-4pt}
    \exi{(a)} He speaks Lao and she wants to meet Yuri Gagarin.
    \exi{(b)} He speaks Lao and she wants to meet the first person in space.
\end{exe}
\end{example}
\noindent Despite both examples having the same verbs, the sentence pair in Ex.~\ref{ex:attractor-eq} is equivalent, but Ex.~\ref{ex:attractor-neq} is not.
We are not using attractors for out-of-domain evaluation; instead, the training and test sets are i.i.d., but we break down the test set performance by categories.

We train a probe on GPT-2-XL and BERT-large over 10 random seeds.
Details are in \S\ref{sec:nl-training-details}.
Table~\ref{tab:nl-probe} reports the results. As expected, both models overfit with the attractor-less simple sentences, achieving perfect accuracy. With attractors in coordinated sentences, however, both models obtain near-random performance overall. Because the training and test sets are i.i.d., this means that semantic equivalence based on referential opacity cannot be probed in our setup from these two models, suggesting an inadequate representation of this phenomenon.\footnote{\rev{There might still be other more complex heuristics, but even so, the probe still fails. Hence we do not need additional attractors to rule out all possible heuristics.}} Interestingly, both models tend to predict equivalence more than non-equivalence \rev{(more prominent with GPT-2 than BERT)}, likely due to the nuanced nature of this task: without training, a human would likely judge equivalence on referentially opaque sentence pairs too.\footnote{Though, with training, it is relatively straightforward to perform this task for a human, so it is reasonable to test the ability in LMs.} \revv{See \S\ref{sec:nl-finetuning} for a set of experiments that show that LMs can potentially learn to capture referential opacity with semantic supervision following pretraining.}

\subsection{Analysis: Sentence Similarity} \label{sec:nl-sim}

As in \S\ref{sec:direct-eval}, the simplicity of a training-free analysis can be desirable.
To this end, we directly measure the cosine similarity between the two sentence representations in a pair. \revv{While this semantic similarity would be high for both groups of sentences by our construction,} equivalent sentence pairs should have more similar representations than those that are not. While factors other than semantics, such as syntax, also affect sentence representations, we strictly control them in our synthetic data generation to be identical between referentially transparent and opaque sentences.
We do not consider attractor sentences (\S\ref{sec:nl-probing}) in this analysis.

For significance testing, we employ an exact permutation test~\citep{fisher1935design} and a bootstrap test~\citep{EfroTibs93} with 1,000 iterations, performed across verbs, where the test statistic is the difference between the averaged cosine similarity of the two groups. Both tests are two-sided with the null hypothesis being that the model representation does not distinguish between the two classes of verbs. For GPT-2-XL, the permutation test gives $p=0.64$ and bootstrap gives $p=0.66$, barring us from rejecting the null hypothesis.
For BERT-large, they give $p=0.45$ and $p=0.57$ respectively, where we again observe no significant difference between the two classes.
\rev{Nonetheless, we note that the inability to reject the null hypothesis does not entail it is true.}

\revv{\citet{reimers-gurevych-2019-sentence} noted that computing sentence pair cosine similarity using BERT's [CLS] token, as we did, does not correlate well with textual similarity benchmarks. \revv{This phenomenon is commonly attributed to the anisotropic nature of pretrained representations~\citep{ethayarajh-2019-contextual}.} This does not undermine the validity of our method, which instead relies on the correlation between the cosine similarity and \emph{the model's representation of semantic closeness}. We ensure this correlation by controlling for all factors other than semantics \revv{(syntax, lexical choices, entities, etc.)}. Nevertheless, we also postprocess BERT's [CLS] representation using BERT-flow~\citep{li-etal-2020-sentence} which has been shown to increase the correlation with textual similarity benchmarks. We obtain a similar result: bootstrap gives $p=0.49$. While the two-sided permutation test gives $p=0.03$ with \revv{potential} significance, the one-sided version gives $p=0.99$; in other words, the calibrated space represents opaque sentence pairs to be more similar than transparent ones, contrary to our expectation that equivalent sentence pairs should be closer in the representation space than non-equivalent ones when all other factors are controlled.}

The results from these two sets of analyses in \S\ref{sec:nl-probing} and \S\ref{sec:nl-sim} are consistent and show no evidence of modern LMs representing referential opacity, demonstrating that they cannot fully emulate the meaning of NL.
Our finding adds to recent observations that pretrained LMs do not \revv{represent semantic phenomena well}~\citepia{tenney2018what,kovaleva-etal-2019-revealing,wu-etal-2021-infusing}.
Theoretically, it also strengthens the connection between strong transparency and meaning emulatability with NL-based empirical evidence.

\section{Discussion} \label{sec:discussion}

Through analyses based on probing and direct evaluation, we have seen that existing LM architectures and objectives can learn to emulate the meaning of a strongly transparent language $L_t$ when the training data reflects equivalence relations. While non-transparency ($L_n$) causes this ability to decrease, the trained models still outperform a random model in certain setups. We believe this result hints at the strength of current LM architectures and objectives.\footnote{Especially since our setting is more challenging than \citet{merrill-etal-2021-provable}'s algorithm, without their unlimited $\assert$-access, active learning, arbitrarily powerful $\delta$, etc. Plus, we restrict $\assert$ queries to be sentences and disallow comparing a sentence with \texttt{T} or \texttt{F} using $\assert$.} There seems to be a limit to this strength, though---in natural language, neither GPT-2 nor BERT represents the non-transparent phenomenon of referential opacity well.

Our results shed light on the relationship between the strong transparency of a language and whether its semantics can be emulated. We observed co-variation between the two: when slightly perturbed to be non-transparent, our logic language becomes harder to emulate; and there is no evidence for LMs representing the semantics of a non-transparent NL phenomenon. Nevertheless, the above-random emulation performance with $L_n$ suggests that there could be language properties that potentially better predict emulatability, leaving room for future theoretical endeavors.

We also found that, with a similar size and training procedure~(\S\ref{sec:pretraining}), ALM is more suitable for representing the meaning of our propositional logic languages than MLM, in our setup.
ALM achieves better probing accuracy than MLM under both methods of obtaining sentence representations that we explored.
Also, MLM completely fails to emulate meaning facing non-transparency, but not ALM.
Ultimately, though, we hope to understand if this difference transfers to natural language.
Our NL investigation reveals that both ALM (GPT-2) and MLM (BERT) achieve chance-level probing performance on the one phenomenon that we inspected, likely due to its difficulty.
It would be interesting for future efforts to further examine their differences, if any, in learning and representing the meaning of other NL phenomena.

Our results also lead to the question: why can LMs achieve above-random results on $L_n$ but not referential opacity?
While it is entirely possible that the latter is simply more difficult than our synthetic non-transparency, there are other factors at play.
First of all, natural language is much more variable than our synthetic language: utterances can be untruthful (though they are in general governed by Gricean quality; \citealp{grice1975logic}), subjective (such as our earlier claim about Corgis' cuteness, \S\ref{sec:assert}),
intensional (see \citealp{merrill-etal-2021-provable} for a discussion), etc.
But putting these variations aside, we saw from \S\ref{sec:lm} that even the synthetic language requires an explicit grounding of \texttt{=} to enable emulation, and this is missing from NL pretraining. It is certainly not the case that, for every expression such as ``Corgis are the cutest dogs.'' that exists in the pretraining corpus, the variations ``The cutest dogs are Corgis.'', ``Corgis are Corgis.'', ``The cutest dogs are the cutest dogs.'' are also guaranteed to appear. So perhaps there needs to be a more foundational change in our pretraining objective. As \citet{gpt3} foretold, ``A more fundamental limitation of [...] scaling up any LM-like model [...] is that it may eventually run into (or could already be running into) the limits of the pretraining objective.'' Our results point to one such possibility: we believe research into a more explicit representation of semantic relations in future pretraining processes, such as based on paraphrases, could be fruitful.

What we did not investigate, though, is whether partial equivalence grounding enables emulation: what if, for example, only 1\% of the pretraining data has this form of grounding, while the rest does not? And the above format already exists for certain sentences in NL. This, too, could be an exciting future research question.

\section{\rev{Related Work}}

\rev{\citet{bender-koller-2020-climbing} initiated the discussion on the possibility of a learner acquiring meaning from training on linguistic forms alone. From first principles, they argued for its impossibility. Empirically, \citet{traylor-etal-2021-mean} also found that LMs cannot well-represent lexical-level symbols when the pretraining data is distributionally constrained to supply relevant signals. \citet{merrill-etal-2021-provable}, on the other hand, proved theoretically that it is possible to emulate the meaning of strongly transparent languages with assertion oracle access. We showed in this work that, empirically, LMs also attain the capability. \citet{patel2022mapping} is also conceptually similar to our work, discovering that the internal representation of LMs is to a large extent isomorphic to the conceptual spaces of directions and colors. They adopted in-context learning~\citepia{gpt3} to elicit the isomorphism, while we used the more traditional probing paradigm.}

\rev{Another line of work has inspected the extent to which pretrained LMs encode various types of semantic information. Some have examined the representation of lexical semantics: \citet{gari-soler-apidianaki-2021-lets} found that BERT representations reflect polysemy levels, and \citet{vulic-etal-2020-probing} showed that they also capture abundant type-level lexical knowledge. On the other hand, \citet{ettinger-2020-bert} and \citet{ravichander-etal-2020-systematicity} have discovered that pretrained LMs do not satisfactorily encode negation and hypernymy, respectively. Moving beyond the lexical level, \citet{wu-etal-2021-infusing} demonstrated that pretrained BERT and RoBERTa models less readily surface semantic dependency information than syntactic dependencies, while \citet{li-etal-2021-implicit} identified evidence of dynamic semantics representation in these models.}
\section{\rev{Conclusion}}

\rev{We have empirically shown that pretrained language models are able to emulate the meaning of a strongly transparent language through pretraining on an assertion-inspired format, but this ability deteriorates when the language is minimally perturbed to be no longer strongly transparent. Furthermore, we found no representation of referential opacity, which is significant for being a non-transparent natural language phenomenon, in pretrained LMs.}

\iftaclpubformat
\section*{Acknowledgments}
We thank the TACL reviewers and action editor for helpful feedback on this work.  We thank Kyle Richardson, Jesse Dodge, and other members of AI2 for insightful discussions.
This work was funded in part by NSF award 1922658. WM was supported by an NSF graduate research fellowship.
\fi

\bibliography{custom}
\bibliographystyle{acl_natbib}

\clearpage
\appendix

\section{Propositional Logic Dataset Details} \label{sec:details-pl}

We hand-designed the PCFG probabilities in Eq.~\ref{eq:cfg}. To expand an $e$, the two binary rules each have 0.06 probability under $L_t$. The $\lnot$ rule and expansion to \texttt{T} and \texttt{F} divide the remaining probability mass, with \texttt{T} and \texttt{F} having the same probability, half of the $\lnot$ rule. As $S$ does not expand to \texttt{T} or \texttt{F}, the other three rules proportionally split the probability mass.
We consider each of $(, ), \land, \lor, \lnot, \texttt{T}, \texttt{F}, \texttt{=}$ as a separate token for tokenization.
We enforce a maximum length of 248 tokens.
We sample all sentences without replacement.
The average $L_t$ sentence length is $\approx$48.6 tokens.
Sampling $L_n$ results in slightly longer sentences, so we decrease the binary rule probabilities to be 0.03 each, but the specification is otherwise the same.
The resulting $L_n$ sentence on average has $\approx$51.7 tokens.
We sample 819.2M pretraining sentences and 1M/10K/10K probe training/validation/test sentences.
Then, for each split, we sample sentence \emph{pairs}, with the same number as the number of sentences in that split.

\section{Propositional Logic Training Details} \label{sec:training-details-pl}

For pretraining, we mostly follow the original hyperparameters for GPT-2-small and RoBERTa-base. We train with batches of 8,192 sequences for 100k steps, equivalent to 1 epoch over our pretraining data. We use the AdamW optimizer~\citep{loshchilov2018decoupled} with epsilon $10^{-8}$ for ALM for $10^{-6}$ for MLM, and $\beta_2$ 0.95 for ALM and 0.98 for MLM. We set the learning rate to $6\times 10^{-4}$ warmed up over 10k steps with a 0.1 weight decay.

For probing, \attnprobe trains a query vector that interacts with the key representation of each token, obtained with a trained key matrix transformation, and the resulting attention weights are used to average the token embeddings.
We train all probes for 3 epochs with batch size 8 and 1,000 warmup steps and select checkpoint with validation accuracy. We use AdamW with $10^{-5}$ learning rate except only for $L_n$ \noattnprobe ALM that benefits from a different learning rate $10^{-3}$. We clip gradients to unit norm.

\section{Referential Opacity Dataset Details} \label{sec:nl-dataset-details}

We detail the generation of our referential opacity dataset, separately discussing its two aspects~(\S\ref{sec:nl-data}).

\subsection{Generating Co-referring Expressions} \label{sec:nl-dataset-details-coreferring-expressions}

For fact probing on LAMA, we use the prompt in the form ``The official language of Laos is known as \_\_" which we found appropriate for the entity types in T-REx. If the LM correctly predicts ``Lao'', we consider this equivalence, or fact, captured by the model. As LAMA was designed to have 1-token answers with BERT's tokenization, we let BERT fill in the blank. This is not a guarantee for GPT-2's tokenization, so we run decoding for the same number of steps as the true answer's length with beam size 5 and no sampling. To further ensure that the predictions are reliable and not due to noise, we only keep entity categories with overall prediction accuracy $>$25$\%$. The resulting categories are ``P37 official language'', ``P364 original language of film or TV show'', ``P140 religion'', ``P103 native language'', and ``P36 capital''. This procedure results in 1,606 facts for GPT-2 and 2,962 facts for BERT.

\subsection{Generating Contexts}

We generate two types of contexts (\S\ref{sec:nl-data}). The first type contains an embedded clause, for which we construct templates for each entity category in \S\ref{sec:nl-dataset-details-coreferring-expressions}. For language entities, for example, one template is ``[\textsc{pronoun}] [\textsc{verb}] to speak [\textsc{entity}].'' A sentence pair is formed by filling in [\textsc{entity}] with a definite description vs. a proper name for a fact. We only consider the pronouns ``She'' and ``He'' in this work. We consider 6 referentially transparent verbs (``starts'', ``begins'', ``ceases'', ``stops'', ``managed'', ``failed'') and 6 referentially opaque verbs (``wants'', ``intends'', ``hopes'', ``begs'', ``preferred'', ``suggested'').  The second type of context contains only the main clause. We use the referentially opaque template ``[\textsc{pronoun}] dislikes [\textsc{entity}].'' and an entity category-specific referentially transparent template such as ``[\textsc{pronoun}] speaks [\textsc{entity}].'' In total, we have 64,672 sentence pairs for GPT-2 and 121,768 for BERT.

For our probing analysis, we also included attractors with coordinated sentences~(\S\ref{sec:nl-probing}). As there are a quadratic number of possible coordinations, we subsampled 59,548 such sentences for GPT-2 and 119,540 for BERT, similar to the number of attractor-less sentences. We split all sentence pairs 8/1/1 for training/validation/testing.

For our similarity analysis, for a cleaner significance test, we only consider sentence pairs with an embedded clause. This leaves 58,776 sentence pairs for GPT-2 and 111,312 for BERT.

\section{Referential Opacity Training Details} \label{sec:nl-training-details}

The probe is trained similarly to \S\ref{sec:training-details-pl} except for 1 epoch with batch size 256 and learning rate $10^{-5}$.

\section{\revv{Can Language Models Learn to Represent Referential Opacity With Appropriate Supervision?}} \label{sec:nl-finetuning}

\begin{figure}[t!]
    \centering
    \includegraphics[width=0.46\textwidth]{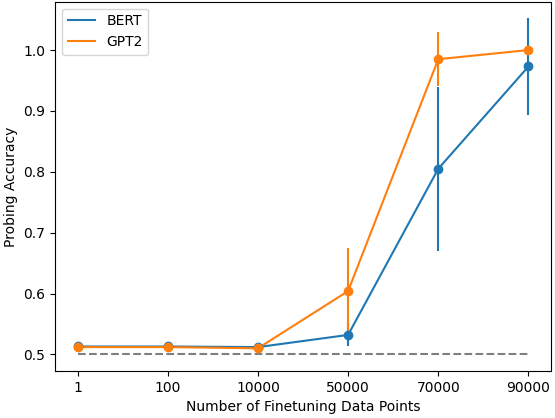}
    \caption{\revv{Probing accuracy after finetuning a pretrained LM on our (coordinated) referential opacity dataset with different numbers of finetuning examples. The mean and the standard deviation across 10 seeds are plotted. For clarity in visualizing the trend, \emph{the x-axis is not in linear scale.}}}
    \label{fig:finetuning-curve}
\end{figure}

\revv{
We showed in \S\ref{sec:nl} that we do not observe evidence of pretrained language models representing the phenomenon of referential opacity. A natural question, then, is whether language models can \emph{learn} to represent it. Following a similar setup as \citet{lyu-etal-2022-favorite} and \citet{liu-etal-2019-inoculation}, we finetune the entire model on a portion of our training set for 1 epoch and conduct the same probing procedure on the resulting model. All training is done with the coordinated data introduced (\S\ref{sec:nl-probing}). Finetuning uses the same hyperparameters in \S\ref{sec:nl-training-details}. Similar to \S\ref{sec:nl-probing}, we report the mean and standard deviation across 10 random seeds for each setting.
}

\revv{
We plot the probing accuracy along with the number of finetuning examples in Figure~\ref{fig:finetuning-curve}. Both GPT-2 and BERT continue to be unable to perform above-random with up to 10,000 finetuning examples, further demonstrating their inadequate semantic representation of referential opacity. Nevertheless, with enough finetuning examples, both models eventually achieve near-100\% probing accuracy. It is, therefore, possible that they can potentially learn to represent referential opacity with sufficient semantic supervision, though we note a caveat: while we introduced coordinated data to prevent an obvious shortcut that the model could take (\S\ref{sec:nl-probing}), it does not eliminate all possible shortcuts. It could be the case that the additional capacity afforded by finetuning enables the model to exploit a more sophisticated shortcut (unknown to us) instead of truly capturing this phenomenon.
}

\end{document}